\documentclass[11pt]{article}
\usepackage[top=1in,bottom=1in,left=1in,right=1in]{geometry} 
\usepackage{palatino}

\usepackage[T1]{fontenc}
\usepackage{lmodern}
\usepackage[american]{babel}
\usepackage{tabularx}
\usepackage{booktabs}
\usepackage[table]{xcolor}

\usepackage{enumerate}
\usepackage{verbatim}
\usepackage{natbib}
\usepackage{amsmath}
\usepackage{subfigure}
\usepackage{enumerate}
\usepackage{subfigure}
\usepackage{verbatim}
\usepackage{natbib}
\usepackage{multirow}
\usepackage{graphicx}
\usepackage{amssymb,amsmath}
\usepackage{color}
\usepackage{hyperref}
\usepackage{url}

\normalsize

\def\be{\begin{eqnarray}}
\def\ee{\end{eqnarray}}
\usepackage{epsfig}

\newtheorem{thm}{Theorem}

\title{Convergence Rates for Neural-Network Estimation with Current-Status Data} 

\author{ Yuan Wu$^\ast$ \and Tianhui Zhou$^\dagger$ } 

\date{} 

\begin{document}

\maketitle 

\begin{center} 
Department of Biostatistics and Bioinformatics, Duke University,\\ Durham, North Carolina, USA\\[2pt] $^\ast$\texttt{yuan.wu@duke.edu}, $^\dagger$\texttt{tianhui.zhou@duke.edu} 
\end{center}

\begin{abstract}
Current-status data arise when an event time is observed only through an indicator of whether it occurred before an examination time. This paper studies a nonparametric neural-network sieve maximum likelihood estimator of the conditional cumulative distribution function of the event time. Under H\"older smoothness assumptions, we establish an explicit convergence rate by combining approximation theory for rectified linear unit neural networks with empirical-process arguments. This result provides theoretical support for neural-network estimation and subsequent inference under current-status observation.
\end{abstract}

\section{Introduction}

Current-status data arise when the event time of interest is not observed exactly. Instead, each subject is examined at a single observation time, and only whether the event occurred before that time is recorded~\cite[]{sun:06}. Such data occur naturally in epidemiological studies, medical screening studies, reliability investigations, and other applications in which continuous monitoring of the event time is unavailable~\cite[]{grummerstrawn:1993}.

Let $Y$ denote an event time, let $Z$ denote an examination time, and let $X$ be a vector of covariates. The observed current-status indicator is
\[
B=I(Y<Z).
\]
The conditional distribution function
\[
g_0(z,x)=P(Y<z\mid X=x)
\]
characterizes the conditional distribution of the event time and is therefore a fundamental estimand in regression analysis with current-status data. Conditional on $(Z,X)$, the indicator $B$ follows a Bernoulli distribution with success probability $g_0(Z,X)$.

Neural networks provide a flexible nonparametric framework for estimating the conditional distribution function, particularly when its dependence on the covariates is nonlinear or high-dimensional. This paper studies a neural-network sieve maximum likelihood estimator and establishes conditions under which it attains a rate suitable for subsequent inference.

A mathematically equivalent binary-learning criterion was used by~\cite{zhou:2021}
in the Collaborating Networks framework. Their
setting, however, assumed completely observed outcomes and introduced
the threshold variable as a device for learning the conditional
distribution; it was not formulated as a current-status survival model.
The present paper identifies the connection with current-status data
and studies the corresponding neural-network sieve maximum likelihood
estimator from a survival-analysis perspective. In particular, we
establish an explicit convergence rate that was not derived in the original Collaborating Networks analysis.

The remainder of the paper is organized as follows. Section~\ref{sec:method} introduces the current-status likelihood and the neural-network sieve estimator. Section~\ref{sec:rate} states and proves the convergence-rate theorem. 

\section{Neural-Network Estimation under Current-Status Observation}
\label{sec:method}

Let
\[
O=(B,Z,X)
\]
denote the observed data, where
\[
B=I(Y<Z),
\]
for event time \(Y\), examination time \(Z\), and covariate vector \(X\). 

Suppose that $O_1,\ldots,O_n$ are independent and identically distributed copies of $O$. Define
\[
W=(Z,X)
\]
and
\[
g_0(w)=g_0(z,x)=P(Y<z\mid X=x).
\]
We assume that the event time $Y$ and examination time $Z$ are
conditionally independent given $X$. It follows that
\[
P(B=1\mid Z=z,X=x)
=
P(Y<z\mid X=x)
=
g_0(z,x).
\]
Consequently, the conditional likelihood of $B$ given $W$ is Bernoulli with success probability $g_0(W)$.

For a candidate conditional distribution function $g$, define the population and empirical log-likelihood criteria by
\[
M(g)
=
E\left[
B\log g(W)
+
(1-B)\log\{1-g(W)\}
\right]
\]
and
\[
M_n(g)
=
\frac{1}{n}
\sum_{i=1}^n
\left[
B_i\log g(W_i)
+
(1-B_i)\log\{1-g(W_i)\}
\right].
\]

For measuring estimation error, define
\[
d^2(g,g_0)
=
E\left[\{g(W)-g_0(W)\}^2\right].
\]

Let $\mathcal H_n$ be a class of bounded rectified linear unit neural (ReLU) networks~\cite[]{nair:2010} with complexity parameter $V_n$ (the number of local approximation pieces used in the ReLU networks) for $V_n<n^A$ and $0<A<1$. In addition, assume the number of depths of the ReLU networks \(D_n\asymp \log V_n\). For a fixed constant $\epsilon>0$, define
\[
g_h(w)
=
\epsilon+(1-2\epsilon)\sigma\{h(w)\},
\qquad
\sigma(u)=\frac{1}{1+e^{-u}},
\]
and let
\[
\mathcal G_n
=
\{g_h:h\in\mathcal H_n\}.
\]
The neural-network sieve maximum likelihood estimator is defined by
\[
\widehat g_n
\in
\arg\max_{g\in\mathcal G_n}M_n(g).
\]

The transformation above ensures that
\[
\epsilon\leq g_h(w)\leq 1-\epsilon
\]
for every $h\in\mathcal H_n$ and every $w$ in the covariate domain. 

\section{Convergence Rate}
\label{sec:rate}

The following theorem establishes the convergence rate of $\widehat g_n$.

\begin{thm}\label{thm}
Assume, without loss of generality, that the covariate vector \(w\) has
support on \([0,1]^d\), obtained through a one-to-one rescaling of a bounded
covariate domain. Suppose there exists \(\epsilon>0\) such that
\[
2\epsilon
<
g_0(w)
<
1-2\epsilon,
\quad \text{with} \quad
w\in[0,1]^d.
\]
Define
\[
h_0(w)
=
\log\left\{
\frac{g_0(w)-\epsilon}
{1-\epsilon-g_0(w)}
\right\}.
\]
Assume that
\[
h_0\in \mathcal H^\beta\left([0,1]^d\right)
\]
for some \(\beta>0\). Then
\[
d\left(\hat g_n,g_0\right)
=
O_p\!\left\{
n^{-\beta/(2\beta+d)}
(\log n)^{4\beta/(2\beta+d)}
\right\}.
\]
In particular,
\[
d\left(\hat g_n,g_0\right)
=
o_p\left(n^{-1/4}\right)
\]
whenever \(\beta>d/2\).
\end{thm}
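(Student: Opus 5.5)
The plan is to apply the general rate theorem for sieve M-estimators (Shen and Wong; Theorem 3.4.1 of van der Vaart and Wellner) to the criterion $M_n$ over $\mathcal G_n$. This requires three ingredients: a curvature condition near $g_0$, a modulus-of-continuity bound for the centered empirical process $\mathbb G_n=\sqrt n(M_n-M)$ over $\mathcal G_n$, and an approximation element $\pi_n g_0\in\mathcal G_n$. The rate is then determined by the standard balance $r_n^{2}\phi_n(1/r_n)\lesssim\sqrt n$ together with $r_n d(\pi_n g_0,g_0)=O(1)$.

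\emph{Curvature.} Because all functions in $\mathcal G_n$ and $g_0$ itself take values in $[\epsilon,1-\epsilon]$, the Bernoulli log-likelihood is strongly concave on this range. Concretely, $M(g_0)-M(g)=E[\mathrm{KL}\{\mathrm{Bern}(g_0(W))\,\|\,\mathrm{Bern}(g(W))\}]$. Pinsker's inequality gives the lower bound, and a second-order expansion with derivatives bounded by constants depending on $\epsilon$ gives the upper bound:
\[
c_1 d^2(g,g_0)\le M(g_0)-M(g)\le c_2 d^2(g,g_0).
\]

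\emph{Approximation.} Since $2\epsilon<g_0<1-2\epsilon$, $h_0$ is well defined and bounded, and $g_0=g_{h_0}$. I will invoke a ReLU approximation result (Yarotsky, or Schmidt-Hieber 2020). For $h_0\in\mathcal H^\beta([0,1]^d)$ it yields $h_n\in\mathcal H_n$ with $D_n\asymp\log V_n$, on the order of $V_n\log V_n$ nonzero parameters, and $\|h_n-h_0\|_\infty\lesssim V_n^{-\beta/d}$. Since $u\mapsto\epsilon+(1-2\epsilon)\sigma(u)$ is $1/4$-Lipschitz, $d(g_{h_n},g_0)\lesssim V_n^{-\beta/d}$.

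\emph{Entropy.} The class $\mathcal H_n$ is uniformly bounded with $S_n\asymp V_n\log V_n$ parameters and depth $D_n\asymp\log V_n$. Its sup-norm covering number satisfies $\log N(\delta,\mathcal H_n,\|\cdot\|_\infty)\lesssim S_nD_n\log(V_n/\delta)$; alternatively one can use the pseudo-dimension bound of Bartlett et al., $S_nD_n\log S_n$. The maps $h\mapsto g_h$ and $g\mapsto B\log g+(1-B)\log(1-g)$ are Lipschitz on $[\epsilon,1-\epsilon]$, so the log-likelihood class $\{m_g-m_{g_0}:g\in\mathcal G_n,\ d(g,g_0)\le\delta\}$ inherits bracketing entropy of order $V_n(\log V_n)^2\log(1/\delta)$, with envelope and $L_2$ diameter $\lesssim\delta$. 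The maximal inequality then gives
\[
\phi_n(\delta)\asymp \delta\sqrt{V_n(\log n)^{2}\log(1/\delta)}+\frac{V_n(\log n)^{3}}{\sqrt n}.
\]
Solving yields an estimation error $\delta_n^2\asymp V_n(\log n)^{c}/n$.

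\emph{Balancing and obstacle.} Setting $V_n^{-2\beta/d}\asymp V_n(\log n)^{c}/n$ gives $V_n\asymp n^{d/(2\beta+d)}(\log n)^{-cd/(2\beta+d)}$, which is below $n^A$ for suitable $A<1$. It also gives $\delta_n\asymp n^{-\beta/(2\beta+d)}(\log n)^{c\beta/(2\beta+d)}$. The main obstacle is the log bookkeeping: the exponent $4\beta/(2\beta+d)$ requires $c=4$ in the entropy bound. This comes from $S_n\asymp V_n\log V_n$, the depth factor $D_n\asymp\log n$, the covering factor $\log(V_n/\delta)\asymp\log n$, and one more $\log n$ from the Bernstein-type remainder term of the maximal inequality. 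I would track these carefully; any slack only increases the exponent of $\log n$ and leaves the polynomial rate unchanged.

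Finally, when $\beta>d/2$ we have $\beta/(2\beta+d)>1/4$, so the polylogarithmic factor is absorbed and $d(\hat g_n,g_0)=o_p(n^{-1/4})$.
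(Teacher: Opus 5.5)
Your proposal is correct and uses the same ingredients as the paper: the two-sided quadratic bound on the averaged Bernoulli Kullback--Leibler divergence (Pinsker for the lower half, where the paper uses a Taylor expansion), Theorem 5 of Schmidt-Hieber for $\|h_n-h_0\|_\infty\lesssim V_n^{-\beta/d}$, Lemma 5 for sup-norm entropy transferred through the Lipschitz maps $h\mapsto g_h\mapsto m_g$, and the maximal inequality of van der Vaart and Wellner.

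Where you genuinely differ is the localization step. You invoke the sieve rate theorem (their Theorem 3.4.1). On each dyadic shell it compares the empirical-process increment with the curvature of order $2^{2j}\delta^2$ available there, then sums the resulting Markov bounds geometrically. Because your curvature and modulus bounds hold on the whole sieve, you may take $\eta\ge 1$, so its consistency hypothesis is vacuous.

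The paper instead peels by hand. It bounds every shell supremum $Z_j$ at the common level $KJ_n2^j\rho_nb_n$, takes a union bound over $J_n\asymp\log n$ shells, and solves $R_n^2=O_p\{a_n^2+(R_n+a_n)\tilde b_n+\tilde b_n^2\}$ with $\tilde b_n=b_n\log n$. That union bound costs a full factor $\log n$ in $R_n$. It accounts for two of the four logarithms in $\tilde b_n^2=V_n(\log n)^4/n$; the other two come from the paper's entropy bound $V_n\log n\log(n/\delta)$. Your route is therefore the sharper one, and this matters. Your entropy count $S_nD_n\log(V_n/\delta)$ is more conservative, with three logarithms. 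Combined with the paper's peeling it would give $V_n(\log n)^5/n$, i.e.\ a log exponent $5\beta/(2\beta+d)$, whereas with the rate theorem it gives $V_n(\log n)^3/n$.

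Your closing log bookkeeping is wrong, though in the harmless direction. The theorem is an $O_p$ upper bound, so you need $c\le 4$, not $c=4$, and your own $\phi_n$ already gives $c=3$. Solving $r_n^2\phi_n(1/r_n)\le\sqrt n$ yields $\delta_n^2\asymp V_n(\log n)^3/n$. The Bernstein remainder $V_n(\log n)^3/\sqrt n$ is of the same order there, so it contributes no extra $\log n$. With the paper's choice $V_n\asymp\{n/(\log n)^4\}^{d/(2\beta+d)}$, the stochastic term $\{V_n(\log n)^3/n\}^{1/2}$ is dominated by $V_n^{-\beta/d}$, which gives exactly the stated rate; retuning $V_n$ gives the better exponent $3\beta/(2\beta+d)$.

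Drop the remark that slack ``only increases the exponent'': an exponent above $4$ would not prove the theorem as stated. The point is that your route has a logarithm to spare. Also write the entropy as $V_n(\log n)^2\log(n/\delta)$ rather than $V_n(\log V_n)^2\log(1/\delta)$. The latter is not a valid bound for $\delta$ of constant order, where the rate theorem also needs $\phi_n$.
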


{\em Proof.}
Write
\[
m_g(O)=B\log g(W)+(1-B)\log\{1-g(W)\}.
\]
By construction,
\[
\epsilon\le g_h(w)\le 1-\epsilon
\]
for all $h\in\mathcal H_n$ and all $w\in[0,1]^d$. Therefore $m_g$ is
uniformly bounded over $g\in\mathcal G_n$. Moreover, since the derivative of
$\log x$ and $\log(1-x)$ is uniformly bounded on $[\epsilon,1-\epsilon]$,
there exists a constant $C_\epsilon<\infty$ such that
\[
|m_g(O)-m_{\tilde g}(O)|
\le C_\epsilon |g(W)-\tilde g(W)|
\]
for all $g,\tilde g\in\mathcal G_n$.

Next,
\[
M(g_0)-M(g)
=
E\left[
g_0(W)\log\frac{g_0(W)}{g(W)}
+
\{1-g_0(W)\}\log\frac{1-g_0(W)}{1-g(W)}
\right].
\]
Since $g_0$ and every $g\in\mathcal G_n$ are uniformly bounded away from
$0$ and $1$, a Taylor expansion of the Bernoulli Kullback--Leibler divergence
implies that there exist constants $0<c<C<\infty$ such that
\[
c\,d^2(g,g_0)
\le
M(g_0)-M(g)
\le
C\,d^2(g,g_0)
\]
for all $g\in\mathcal G_n$.

By the ReLU approximation theorem for H\"older functions ~\cite[Theorem 5,][]{schmidthieber2020}, together with $D_n \asymp \log V_n$, there exists
$h_n^*\in\mathcal H_n$ such that
\[
\|h_n^*-h_0\|_\infty
\lesssim V_n^{-\beta/d}.
\]
Since the map
\[
h\mapsto \epsilon+(1-2\epsilon)\sigma(h)
\]
is Lipschitz, the corresponding function $g_n^*=g_{h_n^*}\in\mathcal G_n$
satisfies
\[
d(g_n^*,g_0)\le a_n,
\quad \text{with} \quad
a_n\asymp V_n^{-\beta/d}.
\]
Hence
\[
M(g_0)-M(g_n^*)\le C a_n^2 .
\]

We now establish the localized empirical-process bound used below. For
$r>0$, define
\[
\mathcal F_n(r)
=
\left\{
m_g-m_{g_n^*}:
g\in\mathcal G_n,\ d(g,g_0)\le r
\right\}.
\]
For any $f=m_g-m_{g_n^*}\in\mathcal F_n(r)$,
\[
\|f\|_{L_2(P)}
\le C_\epsilon d(g,g_n^*)
\le C_\epsilon\{r+a_n\},
\]
that is, the class $\mathcal F_n(r)$ has a uniformly bounded envelope $\sigma_r=C_\epsilon\{r+a_n\}$ in terms of \(L_2(P)\) norm.
Also, by the Lipschitz property of $g\mapsto m_g$, the entropy condition
on $\mathcal G_n$ and Lemma 5 in~\cite{schmidthieber2020},
\[
\log N\{\delta,\mathcal F_n(r),L_2(P)\}
\lesssim V_n\log n \log(n/\delta),
\quad \text{with} \quad 0<\delta<1 .
\]
More specifically, 
By Lemma 5 of~\cite{schmidthieber2020}, together with \(D_n\asymp \log V_n\) and \(V_n<n^A\), the network class
\(\mathcal H_n\) satisfies
\[
\log N\{\delta,\mathcal H_n,\|\cdot\|_\infty\}
\lesssim V_n\log n\log(n/\delta),
\quad \text{with} \quad 0<\delta<1.
\]
Since \(h\mapsto g_h=\epsilon+(1-2\epsilon)\sigma(h)\) is Lipschitz,
\[
\log N\{\delta,\mathcal G_n,\|\cdot\|_\infty\}
\lesssim V_n\log n\log(n/\delta).
\]
Moreover, for any \(g_1,g_2\in\mathcal G_n\),
\[
\|m_{g_1}-m_{g_2}\|_{L_2(P)}
\le
C_\epsilon \|g_1-g_2\|_{L_2(P)}
\le
C_\epsilon \|g_1-g_2\|_\infty .
\]
Therefore an \(L_\infty\)-cover of \(\mathcal G_n\) induces an
\(L_2(P)\)-cover of \(\{m_g:g\in\mathcal G_n\}\). Since translating by
\(m_{g_n^*}\) does not change covering numbers,
\[
\log N\{\delta,\mathcal F_n(r),L_2(P)\}
\lesssim V_n\log n\log(n/\delta),
\quad \text{with} \quad 0<\delta<1.
\]

Then the entropy integral satisfies
\[
J(\sigma_r,\mathcal F_n(r))
=
\int_0^{\sigma_r}
\sqrt{1+\log N\{u,\mathcal F_n(r),L_2(P)\}}\,du
\lesssim
\sigma_r\sqrt{V_n\log^2 n},
\]
where logarithmic constants are absorbed into $\log n$.

By the standard bounded maximal inequality for empirical processes~\cite[Theorem 3.4.2,][]{van:wellner:96},
\[
E\sup_{f\in\mathcal F_n(r)}
\left|
\frac1n\sum_{i=1}^n \{f(O_i)-Ef(O_i)\}
\right|
\lesssim
\frac{J(\sigma_r,\mathcal F_n(r))}{\sqrt{n}}
+\frac{J^2(\sigma_r,\mathcal F_n(r))}{\sigma_r n}
\lesssim
(r+a_n)\left(\sqrt{\frac{V_n\log^2 n}{n}}
+
\frac{V_n\log^2 n}{n}\right).
\]
Equivalently, for each deterministic $r>0$,
\[
E
\sup_{d(g,g_0)\le r}
\left|
\{M_n-M\}(g)-\{M_n-M\}(g_n^*)
\right|
\lesssim
(r+a_n)\left(\sqrt{\frac{V_n\log^2 n}{n}}
+
\frac{V_n\log^2 n}{n}\right).
\]

We now apply this bound to the random estimator $\hat g_n$ by a standard
peeling argument; see Section 3.4.1 of~\cite{van:wellner:96}.
Let
\[
b_n=\sqrt{\frac{V_n\log^2 n}{n}},
\quad \text{and} \quad
\rho_n=a_n+b_n .
\]
Partition the parameter space into dyadic shells
\[
\mathcal S_j
=
\left\{
g\in\mathcal G_n:
2^{j-1}\rho_n<d(g,g_0)\le 2^j\rho_n
\right\},
\quad \text{for} \quad j=0,1,2,\ldots .
\]
For each shell, define
\[
Z_j=
\sup_{g\in\mathcal S_j}
\left|
\{M_n-M\}(g)-\{M_n-M\}(g_n^*)
\right|.
\]
Since
\[
\mathcal S_j\subset
\{g\in\mathcal G_n:d(g,g_0)\le 2^j\rho_n\},
\]
the preceding expectation bound with $r=2^j\rho_n$ gives
\[
E Z_j
\lesssim
(2^j\rho_n+a_n)(b_n+b_n^2) .
\]
Because \(V_n=n^A\) with $0<A<1$, we have $b_n<1$ (indeed, $b_n\to0$) as $n\to\infty$.  In addition, from $\rho_n=a_n+b_n$, it follows that
\[
E Z_j
\lesssim
2^j\rho_n b_n .
\]

Since $0\le g, g_0\le 1$, we have
\[
d(g,g_0)\le 1
\]
for all $g\in\mathcal G_n$. Therefore it is sufficient to consider
\[
J_n=
\left\lceil
\log_2(1/\rho_n)
\right\rceil
\]
dyadic shells.

For a fixed constant $K>0$, define
\[
A_j
=
\left\{
Z_j>
KJ_n\,2^j\rho_n b_n
\right\}.
\]

By Markov's inequality,
\[
P(A_j)
\le
\frac{EZ_j}
     {KJ_n\,2^j\rho_n b_n}
\lesssim
\frac1{KJ_n}.
\]

Therefore, by the union bound,
\[
P\left(
\bigcup_{j=0}^{J_n}A_j
\right)
\le
\sum_{j=0}^{J_n}P(A_j)
\lesssim
\frac1K .
\]

Hence, with probability at least $1-C/K$,
\[
Z_j
\le
KJ_n\,2^j\rho_n b_n,
\quad
\text{for all} \quad 0\le j\le J_n .
\]

On this event, let $j$ denote the shell index such that
$\hat g_n\in\mathcal S_j$.

If $j=0$, then
\[
R_n=d(\hat g_n,g_0)\le \rho_n.
\]

If $j\ge1$, then
\[
2^{j-1}\rho_n<R_n\le 2^j\rho_n,
\]
so
\[
2^j\rho_n\le 2R_n.
\]

Therefore, with probability at least \(1-C/K\),
\[
\left|
\{M_n-M\}(\hat g_n)
-
\{M_n-M\}(g_n^*)
\right|
\lesssim
KJ_n(R_n+\rho_n)b_n.
\]

Since
\[
V_n\le n^A
\]
for all sufficiently large $n$ and
\[
a_n=V_n^{-\beta/d},
\]
we have
\[
a_n\ge n^{-A\beta/d}.
\]
Because
\[
\rho_n=a_n+b_n\ge a_n,
\]
it follows that
\[
\rho_n\ge n^{-A\beta/d}.
\]
Therefore
\[
\frac1{\rho_n}\le n^{A\beta/d},
\]
and hence
\[
J_n
=
\left\lceil
\log_2(1/\rho_n)
\right\rceil
=
O(\log n).
\]
Consequently, from
\[
\rho_n=a_n+b_n,
\]
and the fact that \(K\) can be sufficiently large, it follows that
\[
\left|
\{M_n-M\}(\hat g_n)
-
\{M_n-M\}(g_n^*)
\right|
=
O_p\!\left(
\{(R_n+a_n)b_n+b_n^2\}\log n
\right).
\]

Since $\hat g_n$ maximizes $M_n$ over $\mathcal G_n$,
\[
M_n(\hat g_n)\ge M_n(g_n^*).
\]
Therefore,
\[
\begin{aligned}
M(g_0)-M(\hat g_n)
&=
\{M(g_0)-M(g_n^*)\}
+
\{M(g_n^*)-M(\hat g_n)\} \\
&\le
C a_n^2
+
\{M_n-M\}(\hat g_n)
-
\{M_n-M\}(g_n^*).
\end{aligned}
\]

Using the quadratic lower bound for the population criterion and the preceding
empirical-process bound, we obtain
\[
R_n^2
=
O_p\left[
a_n^2
+
\{(R_n+a_n)b_n+b_n^2\}\log n
\right],
\]
where
\[
b_n=\sqrt{\frac{V_n\log^2 n}{n}} .
\]

Let
\[
\tilde b_n=b_n\log n.
\]
Then
\[
R_n^2
=
O_p\left[
a_n^2
+
(R_n+a_n)\tilde b_n
+
\frac{\tilde b_n^2}{\log n}
\right]
=O_p\left[
a_n^2
+
(R_n+a_n)\tilde b_n
+
\tilde b_n^2
\right]
\]

Since
\[
a_n^2+a_n\tilde b_n+\tilde b_n^2
\lesssim
(a_n+\tilde b_n)^2,
\]
we obtain
\[
R_n^2
=
O_p\left[
R_n\tilde b_n
+
(a_n+\tilde b_n)^2
\right].
\]
Solving the quadratic inequality yields
\[
R_n
=
O_p(a_n+\tilde b_n),
\]
where
\[
\tilde b_n
=
\sqrt{\frac{V_n\log^2 n}{n}}\log n .
\]
Hence
\[
d(\hat g_n,g_0)
=
O_p\left(
V_n^{-\beta/d}
+
\sqrt{\frac{V_n\log^2 n}{n}}\log n
\right).
\]

To obtain the optimal choice of the network complexity $V_n$, balance the
approximation and stochastic terms:
\[
V_n^{-\beta/d}
\asymp
\sqrt{\frac{V_n\log^2 n}{n}}\log n .
\]
Squaring both sides gives
\[
V_n^{-2\beta/d}
\asymp
\frac{V_n(\log n)^4}{n}.
\]
Equivalently,
\[
V_n^{1+2\beta/d}
\asymp
\frac{n}{(\log n)^4}.
\]
Therefore,
\[
V_n
\asymp
\left\{
\frac{n}{(\log n)^4}
\right\}^{d/(2\beta+d)}
=
n^{d/(2\beta+d)}
(\log n)^{-4d/(2\beta+d)}.
\]

Substituting this choice of $V_n$ into the preceding rate yields
\[
d(\hat g_n,g_0)
=
O_p\left[
n^{-\beta/(2\beta+d)}
(\log n)^{4\beta/(2\beta+d)}
\right].
\]

Since
\[
\frac{\beta}{2\beta+d}>\frac14
\quad\Longleftrightarrow\quad
\beta>\frac d2,
\]
it follows that
\[
d(\hat g_n,g_0)
=
o_p(n^{-1/4})
\]
whenever $\beta>d/2$, because any finite power of $\log n$ is dominated by
a positive power of $n$.

This completes the proof.

\bibliographystyle{natbib}
\bibliography{paper_ref}

@article{grummerstrawn:1993,
  author  = {Grummer-Strawn, Laurence M.},
  title   = {Regression Analysis of Current-Status Data:
             An Application to Breastfeeding},
  journal = {Journal of the American Statistical Association},
  year    = {1993},
  volume  = {88},
  number  = {423},
  pages   = {758--765},
  doi     = {10.1080/01621459.1993.10476336}
}

@inproceedings{nair:2010, 
author = {Nair, Vinod and Hinton, Geoffrey E.}, 
title = {Rectified Linear Units Improve Restricted Boltzmann Machines}, 
booktitle = {Proceedings of the 27th International Conference on Machine Learning}, 
year = {2010}, 
pages = {807--814}, 
publisher = {Omnipress} 
}

@article{schmidthieber2020,
author = {Schmidt-Hieber, Johannes},
title = {Nonparametric Regression Using Deep Neural Networks with ReLU Activation Function},
journal = {The Annals of Statistics},
volume = {48},
number = {4},
pages = {1875--1897},
year = {2020}
}

@book{sun:06,
     Author = { Sun, Jianguo },
     Title = { The Statistical Analysis of Interval-censored Failure Time Data},
     Publisher = {Springer-Verlag},
     Year = 2006,
     Address = {New York}
     }

@book{van:wellner:96,
     Author = { van der Vaart, A. W. and Wellner, J. A. },
     Title = { Weak Convergence and Empirical Processes},
     Publisher = {Springer},
     Year = 1996,
     Address = {New York}
     }

@article{zhou:2021,
author = {Zhou, Tianhui and Li, Zhiguo and Wu, Yuan and Carlson, David},
title = {Collaborating Networks for Survival Analysis},
journal = {Journal of Machine Learning Research},
volume = {22},
number = {89},
pages = {1--54},
year = {2021}
}

\end{document}